\pdfoutput=1
\documentclass{article}
\usepackage{iclr2027_conference,times}

\usepackage[utf8]{inputenc}
\usepackage[T1]{fontenc}
\usepackage{amsmath}
\usepackage{hyperref}
\usepackage{url}
\usepackage{cleveref}
\usepackage{booktabs}
\usepackage{amsthm}
\usepackage{amsfonts}
\usepackage{amssymb}
\usepackage{multirow}
\usepackage{nicefrac}
\usepackage{microtype}
\usepackage{xcolor}
\usepackage{xspace}
\usepackage{graphicx}
\usepackage{subcaption}
\usepackage{wrapfig}
\usepackage{algorithm}
\usepackage{algpseudocode}

\theoremstyle{plain}

\newtheorem{proposition}{Proposition}

\makeatletter
\newcommand*{\rom}[1]{\expandafter\@slowromancap\romannumeral #1@}
\makeatother

\def\pdata{\ensuremath{p_{\mathrm{data}}}}

\def\E{\mathbb E}

\def\mask{\mathbf{m}}

\def\param{\theta}

\renewcommand{\vec}[1]{\mathbf{#1}}

\DeclareMathOperator{\KL}{KL}

\newcommand{\rrd}{\ensuremath{\mathrm{R}^2\mathrm{D}}\xspace}
\newcommand{\std}[1]{{\scriptstyle \pm #1}}

\title{Random Remasking Scales \\ Masked Diffusion Inference}

\author{Zhaoyi Zhou \quad Frank Wu \quad Andrea Zanette \\
Carnegie Mellon University \\
\texttt{\{zhaoyiz,frankwu2,azanette\}@andrew.cmu.edu}
}

\iclrfinalcopy

\renewcommand{\headrulewidth}{0pt}

\makeatletter
\def\@maketitle{\vbox{\hsize\textwidth\centering
{\LARGE\sc \@title\par}
\begin{tabular}[t]{c}\bf\rule{\z@}{24pt}\@author\end{tabular}%
\vskip 0.3in minus 0.1in}}
\makeatother

\begin{document}

\maketitle
\fancyhead{}

\begin{abstract}
Masked diffusion models (MDMs) generate sequences by progressively unmasking tokens, but errors introduced during decoding are typically irreversible. Recent work has shown that iterative correction can improve sample quality, yet existing theory fails to explain this given a learned, imperfect denoiser model. We re-formulate MDM inference as a Markov chain on the clean sequence space, defined by a remask-denoise kernel that randomly re-introduces masks and then denoises. This helps us analyze the actual denoising trajectory towards the data distribution. The analysis motivates Random Remask-Denoise (\rrd) inference, a purely test-time, finetuning-free, reward-free procedure. To further improve performance, we propose Trajectory Correction Training (TCT), a denoiser-training procedure that exposes the model to its own errors along the generation trajectory. On logical tasks, our approach exhibits a striking near-log-linear test-time scaling, achieving over 97\% accuracy on hard Sudoku puzzles and over 99\% accuracy on 3-SAT. Our approach also applies to pretrained models on actual language tasks, beating iterative correction baselines and the official Q-mode accuracy of LLaDA2.1-mini on HumanEval+.
\end{abstract}

\section{Introduction}

Masked diffusion models (MDMs) have emerged as a strong alternative to autoregressive models (ARMs) for sequence generation \citep{sahooSimpleEffectiveMasked2024,lou2024discrete,ou2024your,shi2024simplified}. By denoising under bidirectional attention rather than committing to a left-to-right factorization, MDMs can in principle leverage context from both sides when filling in tokens, an advantage that has shown promise on reasoning-heavy tasks.

A central limitation of standard MDM decoding is that revealed tokens are frozen for the rest of the trajectory: Errors introduced by parallel decoding cannot be undone. Iterative correction methods have been proposed to resolve this, either by remasking committed tokens for re-generation \citep{wangRemaskingDiscreteDiffusion2026}, or by training a model with self-correction ability \citep{schiffLearnYourMistakes2026a}.
These methods not only serve as a remedy for parallel-decoding errors, but also unlock \emph{test-time scaling} for MDMs: The number of model evaluations is no longer bounded by the response length. Each correction step can be viewed as an additional unit of compute spent refining the answer, in spirit analogous to chain-of-thought reasoning for autoregressive models, with iterative error correction itself acting as a form of reasoning.

Despite the empirical success, iterative-correction methods fail to explain why they improve generation quality from the theoretical perspective.
Existing explanations, e.g., the ReMDM \citep{wangRemaskingDiscreteDiffusion2026} and the predictor--corrector framework~\citep{campbellContinuousTimeFramework2022} only provide convergence analysis of the intermediate sequence marginal distribution under a pefect denoising model. They fail to analyze the actual trajectory generated by an imperfect denoiser, which is the question that really matters.

\begin{figure}[t]
  \centering
  \begin{subfigure}[t]{0.49\linewidth}
    \centering
    \includegraphics[width=\linewidth]{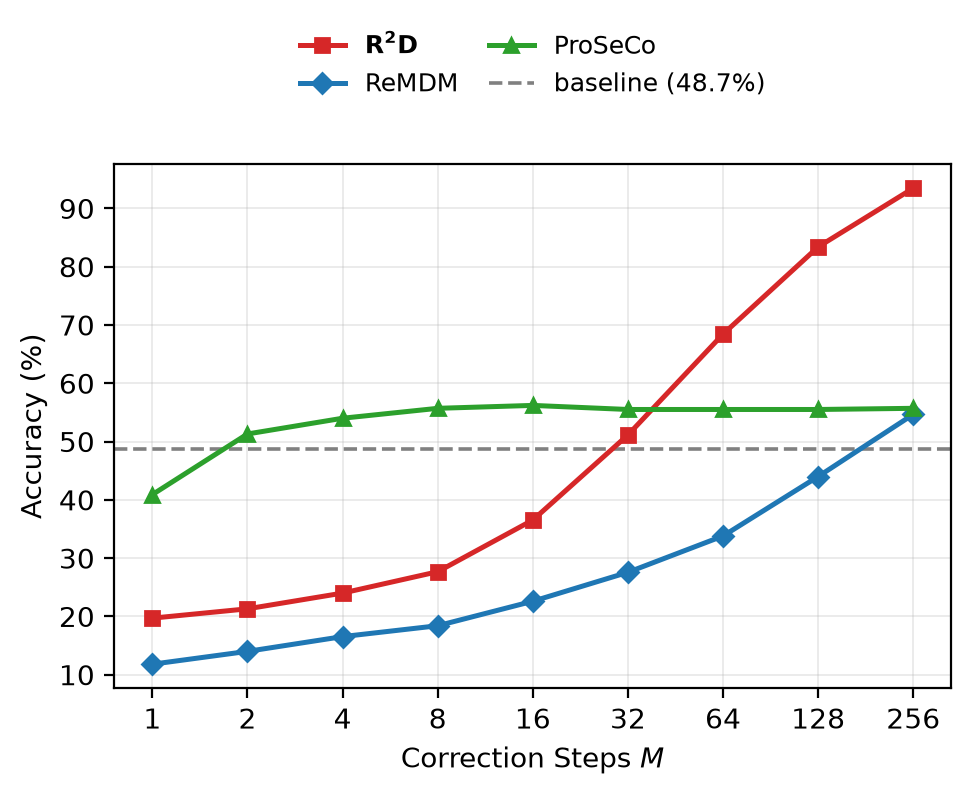}
    \caption{Sudoku-Hard}
  \end{subfigure}\hfill
  \begin{subfigure}[t]{0.49\linewidth}
    \centering
    \includegraphics[width=\linewidth]{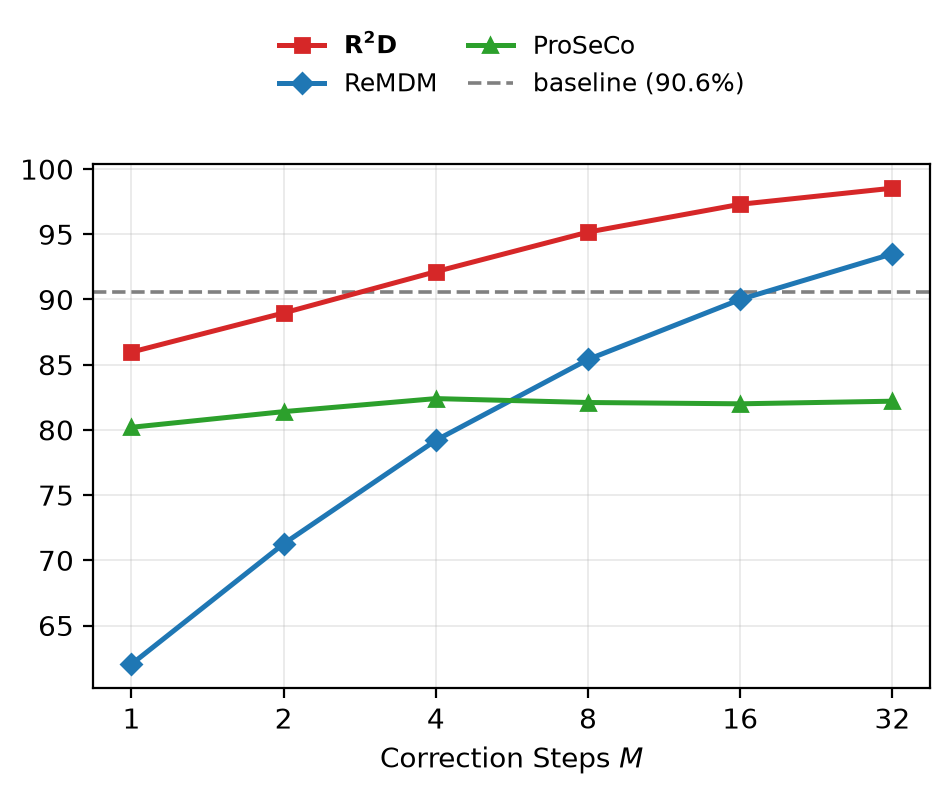}
    \caption{3-SAT}
  \end{subfigure}
  \caption{Inference scaling on Sudoku-Hard and 3-SAT. \rrd and ReMDM are run on standard training checkpoints, both using the standard variant; ProSeCo has its own training method. \rrd scales near-log-linearly on both tasks and stays above ReMDM at every budget, whereas ProSeCo plateaus after a handful of correction steps. }
  \label{fig:main-scaling}
\end{figure}

The prevailing formulation treats MDM inference as an iterative denoising process from a fully masked seuqence into a clean sequence. This is hard for a generation trajectory analysis, as intermediate sequences correspond to different masking ratios and hence lie in different subspaces. Observing that each inference step can be further decomposed into a \emph{denoise} step and a \emph{remask} step, we re-formualte the entire MDM inference process in this work as an iteration over the \emph{clean sequence space}: At each step, some of the tokens in the clean sequence are remasked, and then denoised back into a clean sequence. We identify the \emph{remask--denoise kernel} as the single object underlying both standard decoding and correction, and we derive a single-step update bound of the KL divergence that identifies how the chain more likely moves toward the data distribution at each step.

The bound exposes three factors: Inference timestep schedule, task structure and model quality. The bound suggests a  non-increasing timestep schedule, reducing gradually from 1 (fully masked sequence) to 0 (clean sequence). The challenge is how to decide the number of iterations at each step. We propose a natural solution: Keep iterating at the given step until the clean sequence prediction stabilizes, which leads to our \textbf{Random Remask-Denoise} (\rrd) inference. Random remasking is more favorable for tasks with local constraints, so that the model better reconstructs masks with revealed information, leading to lower denoising error. The model should be trained under the actual inference trajectory, and thus we propose \textbf{Trajectory Correction Training} (TCT), a self-correction training method that is paired with \rrd inference.

On logical tasks like Sudoku-Hard and 3-SAT, \rrd with standard training shows significant scaling effect (\Cref{fig:main-scaling}), boosting accuracy from 19.7\% and 86\% to 93.5\% and 98.5\%. TCT training further increases accuracy to 97.5\% and 99.3\%, respectively. \rrd also works well on language tasks. On the coding benchmark HumanEval+, \rrd with off-the-shelf LLaDA2.1-mini surpasses iterative correction baselines and the official Q-mode inference.

Our main contributions are:
\begin{itemize}
  \item A unified Markov-chain framework for MDM inference, in which standard decoding and self-correction are instances of a single remask--denoise kernel acting on the complete sequence space.
  \item Random Remask-Denoise (\rrd), a reward-free, finetuning-free inference algorithm with an adaptive timestep schedule motivated by the single-step update bound.
  \item Trajectory Correction Training (TCT), a denoiser-training procedure that supplies trajectory-aware supervision at every position and pairs naturally with \rrd.
  \item Across domains like logical tasks and coding, we demonstrate \rrd achieves test-time scaling, and the scaling effect is dependent on the token coupling nature of the task. TCT training further boosts \rrd with self-correction ability.
\end{itemize}

\section{Related work}
\paragraph{Iterative-correction inference.}
A growing body of work equips MDMs with the ability to revise tokens that have already been revealed, falling into two groups. \emph{Training-free} methods \citep{wangRemaskingDiscreteDiffusion2026, shahOSCAROrchestratedSelfverification2026} modify only the inference procedure of a pretrained denoiser. For instance, ReMDM \citep{wangRemaskingDiscreteDiffusion2026} derives a principled remasking sampler from a custom backward process and remasks already-decoded tokens either uniformly at random or by model confidence before re-predicting them. \emph{Trained} approaches instead fine-tune or augment the denoiser to recognize and revise its own mistakes. ProSeCo~\citep{schiffLearnYourMistakes2026a} learns a corrector that consumes the denoiser's own outputs and alternates unmasking with correction at inference. PRISM~\citep{kimFineTuningMaskedDiffusion2025}, RemeDi~\citep{huangDontSettleToo2025}, BackPlay~\citep{liuBackPlayPluginLookBack2026}, and Corrective DLM~\citep{zhangCorrectiveDiffusionLanguage2026} differ in the training recipe but share the same goal of producing per-token quality signals that drive selective remasking. All of these approaches frame correction defensively, as a way to repair errors introduced by parallel decoding; We instead treat iterative remasking and correction as a reasoning mechanism in its own right and as a primary axis for test-time scaling.

\paragraph{Test-time scaling for reward guidance.} Most test-time scaling work for MDMs lies in the reward-guidance regime, in which an external reward function or verifier steers sampling toward higher-quality outputs. These methods scale through three mechanisms: \emph{particle-based sampling}~\citep{singhalGeneralFrameworkInferencetime2025,dangInferenceTimeScalingDiffusion2026,ouInferenceTimeScalingDiscrete2026}, \emph{iterative noise--denoise refinement}~\citep{leeEffectiveTestTimeScaling2025,ueharaRewardGuidedIterativeRefinement2025,phunyaphibarnRewardGuidedDiscreteDiffusion2026}, and \emph{direct guidance}~\citep{tejaswiEntRGiEntropyAware2026,chenRFGTestTimeScaling2025}. The closest to our framing is \citet{phunyaphibarnRewardGuidedDiscreteDiffusion2026}, which also samples from a Markov chain on the complete sequence space but offers limited theoretical analysis of the sampler's convergence. All three families rely on an external quality signal to guide intermediate generation, whereas the inference algorithms we study are reward-free.

\section{Preliminaries}
We briefly review the absorbing masked diffusion model (MDM) setup that our method builds on.

\paragraph{Setup.}
Let $\vec{x}_0 = (\vec{x}_0^1, \dots, \vec{x}_0^L) \in \mathcal{V}^L$ be a clean discrete sequence of length $L$, where $\mathcal{V}$ is a finite vocabulary. We augment the vocabulary with a special mask token $\mask$ and write $\vec{x}_t \in (\mathcal{V} \cup \{\mask\})^L$ for the corrupted sequence at diffusion time $t \in [0,1]$. Let $\alpha_t \in [0,1]$ be a monotonically decreasing noise schedule with $\alpha_0 = 1$ and $\alpha_1 = 0$. Clean sequences are drawn from a data distribution $\pdata$ over $\mathcal{V}^L$.

\paragraph{Diffusion training objective.}
An MDM learns a network $f_\theta$ whose token-wise output $f_\theta^i(\cdot \mid \vec{x}_t) \in \Delta(\mathcal{V})$ predicts the clean token given the current intermediate sequence $\vec{x}_t$. Standard MDMs adopt carry-over unmasking, i.e., $f_\theta^i(\vec x_t^i \mid \vec{x}_t)=1$ if $\vec x_t^i\in \mathcal V$, while self-correction models relax this constraint. The training objective simplifies to a weighted cross-entropy loss over masked tokens \citep{ou2024your,sahooSimpleEffectiveMasked2024,shi2024simplified}:
\begin{equation*}
  \mathcal{L}_{\mathrm{MDM}}(\theta)
  = \mathbb{E}_{\vec{x}_0 \sim \pdata,\, t,\, \vec{x}_t \sim q_{t\mid 0}(\cdot \mid \vec{x}_0)}
  \left[
    \frac{\dot{\alpha}_t}{1-\alpha_t}
    \sum_{i=1}^L \mathbf{1}\{\vec{x}_t^i = \mask\}
    \bigl(-\log f_\theta^i(\vec{x}_0^i \mid \vec{x}_t)\bigr)
  \right].
\end{equation*}
Standard MDM training thus teaches the model to recover each clean token independently at every currently masked position. Because the per-position outputs are conditionally independent given $\vec x_t$, the network induces a joint distribution
\begin{equation*}
  f_\theta(\vec x_0 \mid \vec x_t) \;:=\; \prod_{i=1}^L f_\theta^i(\vec x_0^i \mid \vec x_t),
\end{equation*}
which we use in \Cref{sec:method} as the model's posterior over clean sequences.

\paragraph{Standard diffusion inference.}
At inference time, generation starts from the fully masked sequence $\vec{x}_{t_0} = (\mask, \dots, \mask)$ and proceeds along a decreasing time grid $1=t_0 > t_1 > \dots > t_K = 0$. At each step, the model predicts token distributions $f_\theta^i(\cdot \mid \vec{x}_{t_j})$ at all masked positions, and an unmasking policy selects a subset $S_j \subseteq \{i : \vec{x}_{t_j}^i = \mask\}$ of size $(\alpha_{t_{j+1}} - \alpha_{t_j})L$ to reveal --- either uniformly at random or by model confidence. For each $i \in S_j$, a token is drawn from $f_\theta^i(\cdot \mid \vec{x}_{t_j})$ (by sampling or arg-max) and written into the sequence to produce $\vec{x}_{t_{j+1}}$. Standard inference never revises an already revealed token: Once a position is unmasked, it stays fixed for the remainder of the trajectory. Our work departs from this convention by allowing iterative correction of previously revealed tokens.

\section{Method}
\label{sec:method}

The standard view of MDM inference is an iteration over the intermediate sequence space: Starting at $t = 1$, the noise level decreases monotonically until a clean sequence $\vec x_0$ is produced. This view is faithful to the underlying diffusion theory but has two limitations. First, correction is treated as an engineering add-on for fixing model errors rather than an inherent part of inference. Second, the framework makes it difficult to reason about iterative correction under an imperfect denoiser: Because the iteration variable $\vec x_t$ changes type at every step, intermediate sequences live in different spaces, and there is no unified metric for tracking how close the current state is to the data distribution.

Observe that each inference step can be further decomposed into a \emph{denoise} step and a \emph{remask} step. In the denoise step, the model predicts the clean sequence $\vec x_0$ from any intermediate sequence $\vec x_t$, and in the remask step, some of the tokens in $\vec x_0$ are turned back into masks, resulting in another intermediate sequence at another noise level $\vec x_{t'}$. In other words, the MDM inference process, either standard denoising or iterative correction, can be unified into the following transition:
\begin{align*}
  & \vec x_{t_0}\to \vec x_0 \to \vec x_{t_1} \to \vec x_0 \to \cdots \to \vec x_{t_n} \\
  & t_0=1, t_n=0
\end{align*}
Therefore, we can view the inference process as an iteration over the clean sequence space $\mathcal V^L$. Under a Markovian remask step, the transition is given by a \emph{remask--denoise kernel}:
\begin{align*}
  \tilde K^\theta_{t}(\vec x_0' \mid \vec x_0) \;=\; \sum_{\vec x_t} f_\theta(\vec x_0' \mid \vec x_t)\,q_{t\mid 0}(\vec x_t \mid \vec x_0),
\end{align*}
which approximates the Bayes-optimal counterpart
\begin{align*}
  K_{t}(\vec x_0' \mid \vec x_0) \;=\; \sum_{\vec x_t} q_{0\mid t}(\vec x_0' \mid \vec x_t)\,q_{t\mid 0}(\vec x_t \mid \vec x_0).
\end{align*}
We refer to $\tilde K^\theta_t$ as the \emph{model kernel} and to $K_t$ as the \emph{true kernel}.

\begin{proposition}[Stationarity of the true kernel, proof in \Cref{app:stationarity-proof}.]
\label{prop:stationarity}
For every $t \in (0,1)$, the kernel $K_t$ admits $\pdata$ as a stationary distribution: $\pdata K_t = \pdata$.
\end{proposition}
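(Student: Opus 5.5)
The plan is to show that the one-step marginal of the chain started from $\pdata$ is again $\pdata$, by viewing $K_t$ as a forward corruption followed by the exact Bayes posterior. Define the joint law $\pi(\vec x_0, \vec x_t) := \pdata(\vec x_0)\, q_{t\mid 0}(\vec x_t \mid \vec x_0)$ on $\mathcal V^L \times (\mathcal V\cup\{\mask\})^L$. Its $\vec x_t$-marginal is $q_t(\vec x_t) = \sum_{\vec x_0} \pdata(\vec x_0) q_{t\mid 0}(\vec x_t\mid\vec x_0)$. The posterior $q_{0\mid t}$ appearing in the true kernel is, by definition, the conditional of $\pi$ given $\vec x_t$. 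Concretely, $q_{0\mid t}(\vec x_0' \mid \vec x_t) = \pdata(\vec x_0') q_{t\mid 0}(\vec x_t\mid \vec x_0')/q_t(\vec x_t)$ whenever $q_t(\vec x_t)>0$.

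With this in place, I will compute $(\pdata K_t)(\vec x_0')$ directly. Exchanging the finite sums gives
\begin{align*}
(\pdata K_t)(\vec x_0') &= \sum_{\vec x_0} \pdata(\vec x_0) \sum_{\vec x_t} q_{0\mid t}(\vec x_0'\mid \vec x_t)\, q_{t\mid 0}(\vec x_t\mid \vec x_0) \\
&= \sum_{\vec x_t} q_{0\mid t}(\vec x_0'\mid \vec x_t)\, q_t(\vec x_t).
\end{align*}
Substituting Bayes' formula for $q_{0\mid t}$ cancels $q_t(\vec x_t)$. This leaves $\pdata(\vec x_0') \sum_{\vec x_t} q_{t\mid 0}(\vec x_t\mid \vec x_0') = \pdata(\vec x_0')$, because $q_{t\mid 0}(\cdot\mid\vec x_0')$ is a probability distribution. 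Equivalently, the chain samples $\vec x_0\sim\pdata$, corrupts it, and resamples from the exact posterior. This procedure draws the pair from $\pi$ in the reverse order, so the new clean sequence has the $\vec x_0$-marginal of $\pi$, which is $\pdata$.

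The only step needing care is the set of $\vec x_t$ with $q_t(\vec x_t)=0$, where $q_{0\mid t}$ is undefined. I expect this to be the (mild) main obstacle. These terms carry zero weight $q_t(\vec x_t)$ in the second line, so any convention for $q_{0\mid t}$ there leaves the sum unchanged. In the Bayes substitution, every term with $q_{t\mid 0}(\vec x_t\mid\vec x_0')>0$ and $\pdata(\vec x_0')>0$ has $q_t(\vec x_t)>0$, so none of the mass is lost. The restriction $t\in(0,1)$ is not needed for the argument, though it holds trivially at the endpoints too. Note that I do not need the product structure of the absorbing masking process, only that $q_{t\mid 0}$ is a Markov kernel and $q_{0\mid t}$ is its Bayes inverse under $\pdata$.
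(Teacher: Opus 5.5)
Your proposal is correct and follows the paper's proof essentially verbatim: exchange the finite sums to get $\sum_{\vec x_t} q_{0\mid t}(\vec x_0'\mid \vec x_t)\,q_t(\vec x_t)$, then apply Bayes' rule and sum $q_{t\mid 0}(\cdot\mid \vec x_0')$ to $1$. Your additional handling of the states with $q_t(\vec x_t)=0$ and your remark that only the Markov-kernel/Bayes-inverse structure is used (so the masking product structure and the restriction $t\in(0,1)$ are unnecessary) are valid refinements that the paper leaves implicit.
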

\Cref{prop:stationarity} implies that the model kernel $\tilde K^\theta_t$ must be close to the true kernel $K_t$ for the chain to converge near $\pdata$, which in turn requires the remasking step to be strictly Markovian. However, the remask step in standard diffusion inference violates this requirement, since it only remasks a previous mask, which is history information inaccessible in $\vec x_0$. In our formulation non-Markovian remasking is an unnecessary source of error, and thus we recommend remasking every token in the clean sequence $\vec x_0$ uniformly at random, ensuring the Markov property.

This convergence guarantee, however, is hardly possible to achieve with a learned denoiser. So we would like to know how to make the actual generation close to the data distribution, by doing the best in every single step.
To analyze it, we fix any $p \neq \pdata$ over $\Delta(\mathcal V^L)$ on the chain's state space and write $p' = p \tilde K^\theta_t$ for its image after one kernel step, where $(p\tilde K^\theta_t)(\vec x_0') := \sum_{\vec x_0} p(\vec x_0)\,\tilde K^\theta_t(\vec x_0' \mid \vec x_0)$. We use $\KL(\pdata \| p)$ as the discrepancy metric between the ground truth distribution $\pdata$ and current distribution $p$.

\begin{proposition}[Single-step update bound of KL divergence]\label{prop:identity}
For any distribution $p \neq \pdata$ over $\mathcal{V}^L$ and kernel step $t$, let $\tilde{q}_t(\vec x_t) = \sum_{\vec x_0} q_{t\mid 0}(\vec x_t \mid \vec x_0)\,p(\vec x_0)$ be the marginal obtained by remasking $p$, and define the \emph{contraction ratio} of the masking channel
\begin{equation}\label{eq:rho}
\rho_t(p) \;=\; \frac{\KL(q_t \,\|\, \tilde{q}_t)}{\KL(\pdata \,\|\, p)},
\end{equation}
which satisfies $\rho_t(p) \in [0, 1]$ by the data-processing inequality. Then
\begin{equation*}
\KL(\pdata \,\|\, p') \;\le\; \rho_t(p) \cdot \KL(\pdata \,\|\, p) \;+\; \epsilon_t,
\end{equation*}
where
\begin{equation*}
\epsilon_t \;=\; \E_{\vec x_t \sim q_t}\bigl[\KL\bigl(q_{0\mid t}(\cdot \mid \vec x_t) \,\big\|\, f_\theta(\cdot \mid \vec x_t)\bigr)\bigr].
\end{equation*}
\end{proposition}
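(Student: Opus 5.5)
The plan is to write both $\pdata$ and $p'$ as the output of the same two-stage channel, remask then denoise, and then apply the chain rule for KL to the resulting joint distributions. Here $q_t(\vec x_t) = \sum_{\vec x_0} q_{t\mid 0}(\vec x_t\mid \vec x_0)\pdata(\vec x_0)$ is the forward marginal of the data.

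\textbf{Step 1: the true joint.} By Bayes' rule $q_{0\mid t}(\vec x_0'\mid\vec x_t)q_t(\vec x_t) = q_{t\mid0}(\vec x_t\mid\vec x_0')\pdata(\vec x_0')$. Hence
\[
P(\vec x_t,\vec x_0') := q_t(\vec x_t)\,q_{0\mid t}(\vec x_0'\mid\vec x_t)
\]
has $\vec x_0'$-marginal equal to $\pdata$; this is exactly the computation behind \Cref{prop:stationarity}.

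\textbf{Step 2: the model joint.} Define
\[
Q(\vec x_t,\vec x_0') := \tilde q_t(\vec x_t)\,f_\theta(\vec x_0'\mid\vec x_t).
\]
Its $\vec x_0'$-marginal is $\sum_{\vec x_t} f_\theta(\vec x_0'\mid\vec x_t)\sum_{\vec x_0}q_{t\mid0}(\vec x_t\mid\vec x_0)p(\vec x_0) = (p\tilde K^\theta_t)(\vec x_0') = p'(\vec x_0')$. This uses nothing beyond the definition of the model kernel and exchanging the finite sums.

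\textbf{Step 3: chain rule and data processing.} Marginalising out $\vec x_t$ cannot increase KL, so $\KL(\pdata\|p') \le \KL(P\|Q)$. Expanding $\KL(P\|Q)$ by the chain rule, factoring through $\vec x_t$ first, gives
\[
\KL(P\|Q) = \KL(q_t\|\tilde q_t) + \E_{\vec x_t\sim q_t}\bigl[\KL\bigl(q_{0\mid t}(\cdot\mid\vec x_t)\,\big\|\,f_\theta(\cdot\mid\vec x_t)\bigr)\bigr].
\]
The first term equals $\rho_t(p)\KL(\pdata\|p)$ by the definition \eqref{eq:rho}, and the second is $\epsilon_t$. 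This yields the claimed bound.

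\textbf{Range of $\rho_t(p)$.} The remask marginals $q_t$ and $\tilde q_t$ are images of $\pdata$ and $p$ under the same channel $q_{t\mid0}$. Data processing therefore gives $\KL(q_t\|\tilde q_t)\le\KL(\pdata\|p)$, and nonnegativity of KL gives the lower bound, so $\rho_t(p)\in[0,1]$.

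\textbf{Degenerate cases.} I expect no real obstacle, only bookkeeping about supports and infinities. If $\KL(\pdata\|p)=\infty$, the bound is trivial with the convention $\rho_t\cdot\infty$ read as $\KL(q_t\|\tilde q_t)$, or simply as $\infty$. If $\epsilon_t=\infty$, the bound is also trivial. The hypothesis $p\neq\pdata$ only ensures the denominator in \eqref{eq:rho} is positive.

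The one point that needs care is choosing the right factorisation of the joint. Factoring through $\vec x_t$ is what makes the denoising error appear weighted by the \emph{true} marginal $q_t$ rather than by $\tilde q_t$. This is why the Bayes identity in Step 1 is set up with $\pdata$ as the output marginal.
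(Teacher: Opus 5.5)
Your proof is correct and is essentially the paper's argument: the paper builds the same two joints (its $\nu$ and $\mu$ are your $P$ and $Q$), expands $\KL(\nu\|\mu)$ through $\vec x_t$ to get $\KL(q_t\|\tilde q_t)+\epsilon_t$, and expands it through $\vec x_0'$ to get $\KL(\pdata\|p')+R_t$, where $R_t\ge 0$ is a conditional KL. Your data-processing step under marginalisation is exactly the step of dropping $R_t$. The only difference is that the paper first records the equality $\KL(\pdata\|p')=\rho_t(p)\KL(\pdata\|p)+\epsilon_t-R_t$ and then discards the remainder.
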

The proof, which derives the stronger equality form, is in \Cref{app:identity-proof}. The two terms in \Cref{prop:identity} have distinct interpretations:
\begin{itemize}
  \item $\rho_t(p)$, the \emph{contraction ratio}, is monotonically decreasing in $t$: A larger $t$ destroys more information about $\vec x_0$, making the two marginals more similar (data-processing inequality), so the term shrinks smoothly from 1 to 0 as $t$ ranges from $0$ to $1$.
  \item $\epsilon_t$ is the \emph{denoising error}: The expected KL divergence between the Bayes-optimal posterior $q_{0\mid t}(\cdot \mid \vec x_t)$ and the model's $f_\theta(\cdot \mid \vec x_t)$. At large $t$ the input is mostly masked and prediction is harder, so $\epsilon_t$ typically increases with $t$.
\end{itemize}

We want to achieve a small $\KL(\pdata \,\|\, p')$ at each step, and three factors affect the update bound: Timestep schedule, task structure, and model quality. We analyze each in turn in the following subsections.

\subsection{Schedule: Random Remask-Denoise (\rrd) Inference}

\begin{algorithm}[t]
\caption{Random Remask-Denoise (\rrd) Inference (standard \& correction variants).}
\label{alg:r2d-inference}
\begin{algorithmic}[1]
\Require Trained MDM $f_\theta$, decoding steps $K$, per-level correction budget $M$, stability threshold $\tau$
\State $\vec x_1 \gets \mask^L$ \Comment{initialize fully masked, $t = 1$}
\For{$i = K-1, K-2, \cdots, 0$}
    \State $t \gets i / K$
    \State $\hat{\vec x}_0^{\mathrm{prev}} \gets \texttt{None}$;\quad $c \gets 0$ \Comment{consecutive-repetition counter}
    \For{$j = 1, 2, \cdots, M+1$} \Comment{repeat denoise-remask at noise level $t$}
        \State $\hat{\vec x}_0 \gets \arg\max f_\theta(\cdot \mid \vec x_t)$
        \If{\textbf{standard} variant} \Comment{skipped by the \emph{correction} variant}
            \State $(\hat{\vec x}_0)_\ell \gets (\vec x_t)_\ell$ for $\ell$ with $(\vec x_t)_\ell \neq \mask$ \Comment{freeze revealed tokens}
        \EndIf
        \State $c \gets c + 1$ \textbf{if} $\hat{\vec x}_0 = \hat{\vec x}_0^{\mathrm{prev}}$ \textbf{else} $0$
        \State $\hat{\vec x}_0^{\mathrm{prev}} \gets \hat{\vec x}_0$
        \State $n \gets \lfloor (1-t)\, L \rfloor$ \Comment{\# positions to reveal at noise level $t$}
        \State sample $S \subseteq \{1, \cdots, L\}$ uniformly with $|S| = n$
        \State $(\vec x_t)_\ell \gets (\hat{\vec x}_0)_\ell$ for $\ell \in S$;\quad $(\vec x_t)_\ell \gets \mask$ for $\ell \notin S$ \Comment{remask}
        \If{$c \geq \tau$} \Comment{converged: $\tau+1$ identical predictions in a row}
            \State \textbf{break}
        \EndIf
    \EndFor
\EndFor
\State \Return $\hat{\vec x}_0$
\end{algorithmic}
\end{algorithm}
The next question is how to choose the timestep $t$ at each step of the chain. A non-increasing schedule is intuitively right: Early on $\KL(\pdata\|p)$ is large and a larger $t$ can reduce the factor $\rho_t(p)$ even at the cost of higher denoising error $\epsilon_t$, whereas late in the chain $\KL(\pdata\|p)$ is small and only a smaller $t$ keeps $\epsilon_t$ from dominating. ProSeCo \citep{schiffLearnYourMistakes2026a} skips remasking entirely during correction (i.e.\ $t=0$), which is suboptimal under \Cref{prop:identity} since $\rho_0(p) = 1$ and does not enjoy contraction. Standard inference and ReMDM \citep{wangRemaskingDiscreteDiffusion2026} follow this intuition, but the latter uses a fixed uniform schedule by default, which is suboptimal: Early large-$t$ steps are wasted while $\rho_t(p)\KL(\pdata\|p)$ still dominates, and late small-$t$ steps are bottlenecked by $\epsilon_t$ and yield little further progress. Heuristically, we want to move on to a smaller timestep once performance saturates.

We therefore propose \textbf{Random Remask-Denoise} (\rrd) inference, given in \Cref{alg:r2d-inference}, which uses an adaptive timestep schedule. The noise variable is discretized into $K$ equally spaced grid points $t \in \{(K-1)/K, (K-2)/K, \cdots, 1/K, 0\}$, with $K = L$ as a natural default (one outer step per token). Starting from the largest $t$, we hold the noise level fixed and iterate the model kernel $\tilde K^\theta_t$, treating the inner loop as a search for a fixed point of $\tilde K^\theta_t$ at that level.

The main challenge lies in performance saturation detection. KL divergence is our theoretical metric, but it is intractable to compute as it involves the actual distribution of the intermediate sequence. We turn to a heuristic alternative: Under argmax sampling, convergence can be detected once $\hat{\vec x}_0$ has remained unchanged across $\tau$ consecutive denoise steps (Line 15). We set a per-level budget $M$ so that inference ends with at most $K(M+1)$ forward passes, but our experiments show that actual inference uses far fewer forward passes.

We instantiate \rrd in two variants depending on whether the denoiser has been trained for correction, differing only in Line 7 of \Cref{alg:r2d-inference}. The \emph{standard} variant executes that line, carrying over the revealed positions of $\vec x_t$ and filling in only the masked entries, which matches the behavior assumed by an off-the-shelf MDM denoiser. The \emph{correction} variant skips it, overwriting every position with the model's prediction so that previously revealed tokens are revised whenever the denoiser disagrees with them. In either case, \rrd is a purely test-time procedure that requires no retraining and applies to any pretrained MDM denoiser, with or without explicit correction capability. Note that the standard variant also has error correction ability through random remasking.

\subsection{Task Structure: When Is the Denoising Error Small?}
\label{sec:task}
At a fixed noise level, iterating \Cref{prop:identity} drives the KL divergence toward a floor determined by the denoising error. The error depends on how well the model $f_\theta$ learns the posterior $q_{0\mid t}$. Training is important, but task structure also plays a role here. Tasks with sparse, local constraints are easier for the model to fit the posterior. As long as model understands the constraints during training, reconstruction becomes easy with only a subset of the tokens revealed.

Sudoku and 3-SAT are two examples that fall into this regime. Every Sudoku constraint spans only 9 cells, so a cell is determined once the other 8 cells of its row, column, or box are revealed; Every 3-SAT clause has only 3 literals, so the third is forced once the other two are revealed false. A small $\epsilon_t$ at moderate $t$ in turn lets \rrd operate at larger noise levels, where $\rho_t(p)$ is smaller, before the denoising error dominates.

\subsection{Model Quality: Trajectory Correction Training (TCT)}

Having fixed the schedule, we now turn to the denoiser. Bounding $\KL(\pdata \| p')$ directly leads to a trajectory-aware ELBO over the denoiser parameters $\theta$.
\begin{proposition}[Trajectory-aware training bound]\label{prop:training-bound}
For any distribution $p$ over $\mathcal{V}^L$ and timestep $t$, let $p' = p \tilde K^\theta_t$ and $\tilde{q}_t(\vec x_t) = \sum_{\vec x_0} q_{t\mid 0}(\vec x_t \mid \vec x_0)\, p(\vec x_0)$. Then
\begin{align*}
\KL(\pdata \,\|\, p') \;\leq\; \E_{\vec x_t \sim \tilde{q}_t,\, \vec x_0 \sim \pdata}\!\bigl[-\log f_\theta(\vec x_0 \mid \vec x_t)\bigr] - H(\pdata),
\end{align*}
where $H(\pdata) = -\E_{\vec x_0 \sim \pdata}[\log \pdata(\vec x_0)]$ is the (Shannon) entropy of the data distribution.
\end{proposition}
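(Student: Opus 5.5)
The plan is to write the left-hand side as a cross-entropy minus an entropy, and then bound the cross-entropy term with Jensen's inequality. Throughout, $\vec x_t \sim \tilde q_t$ and $\vec x_0 \sim \pdata$ are taken to be drawn independently, which is how I read the product expectation in \Cref{prop:training-bound}.

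\textbf{Step 1: cross-entropy decomposition.} By the definition of KL divergence,
\begin{equation*}
\KL(\pdata \,\|\, p') \;=\; \E_{\vec x_0 \sim \pdata}\bigl[-\log p'(\vec x_0)\bigr] \;-\; H(\pdata).
\end{equation*}
The entropy term already matches the statement. It therefore suffices to show, for every fixed $\vec x_0$,
\begin{equation*}
-\log p'(\vec x_0) \;\le\; \E_{\vec x_t \sim \tilde q_t}\bigl[-\log f_\theta(\vec x_0 \mid \vec x_t)\bigr].
\end{equation*}
Taking the expectation over $\vec x_0 \sim \pdata$ then gives the claim.

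\textbf{Step 2: rewrite $p'$ as a mixture over the remasked marginal.} Unfold $p' = p\tilde K^\theta_t$ using the definition of the model kernel, and exchange the two finite sums:
\begin{equation*}
p'(\vec x_0') \;=\; \sum_{\vec x_0} p(\vec x_0) \sum_{\vec x_t} f_\theta(\vec x_0' \mid \vec x_t)\, q_{t\mid 0}(\vec x_t \mid \vec x_0) \;=\; \sum_{\vec x_t} \tilde q_t(\vec x_t)\, f_\theta(\vec x_0' \mid \vec x_t).
\end{equation*}
So $p'(\vec x_0') = \E_{\vec x_t \sim \tilde q_t}\bigl[f_\theta(\vec x_0' \mid \vec x_t)\bigr]$. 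This step is the crux: the dependence on $p$ enters only through $\tilde q_t$, and the denoiser is applied to that marginal rather than to $q_t$.

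\textbf{Step 3: Jensen.} Since $-\log$ is convex, Jensen's inequality gives
\begin{equation*}
-\log \E_{\vec x_t \sim \tilde q_t}\bigl[f_\theta(\vec x_0 \mid \vec x_t)\bigr] \;\le\; \E_{\vec x_t \sim \tilde q_t}\bigl[-\log f_\theta(\vec x_0 \mid \vec x_t)\bigr].
\end{equation*}
This is exactly the per-$\vec x_0$ inequality required in Step 1.

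\textbf{Edge cases.} If $f_\theta(\vec x_0 \mid \vec x_t) = 0$ for some $\vec x_t$ in the support of $\tilde q_t$ and some $\vec x_0$ in the support of $\pdata$, the right-hand side is $+\infty$ and the bound holds trivially. Otherwise all quantities are finite, because the state space is finite.

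I do not expect a real obstacle. The only point needing care is Step 2, namely recognizing that $p'$ is a $\tilde q_t$-mixture of the denoiser outputs. After that, the bound is the standard ELBO/Jensen argument. The inequality is loose precisely by the Jensen gap, which could be written as a posterior KL if one wanted an equality form.
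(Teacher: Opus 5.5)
Your proposal is correct and follows essentially the same route as the paper: write $p'(\vec x_0) = \E_{\vec x_t \sim \tilde q_t}[f_\theta(\vec x_0 \mid \vec x_t)]$, apply Jensen to $-\log$ pointwise in $\vec x_0$, then average over $\pdata$ and subtract $H(\pdata)$. Your explicit unfolding of the kernel in Step 2 and your remark on infinite values are small additions that the paper leaves implicit.
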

Since $H(\pdata)$ is independent of $\theta$, minimizing the bound reduces to a cross-entropy objective on inputs $\vec x_t$ drawn from $\tilde{q}_t$ --- the marginal of the chain's own state $p$ remasked to level $t$. The proof is in \Cref{app:training-bound-proof}.

Faithfully emulating the inference trajectory of \Cref{alg:r2d-inference} during training would spend many gradient steps on a single example before the chain advances. Inspired by progressive-unmasking training (PUMA) \citep{kimStopTrainingWorst2026}, we make three simplifications:
\begin{itemize}
  \item Omit the correction inner loop, keeping only the outer decoding steps;
  \item Reduce the decoding trajectory to a fixed stage budget $K$;
  \item Use a low-confidence reveal policy to maintain generation quality under that limited budget.
\end{itemize}

The resulting procedure is Trajectory Correction Training, whose pseudo-code is given in \Cref{alg:correct-training} (\Cref{app:tct-algorithm}).
It maintains $B$ independent decoding chains in parallel, with chain $j$ at stage $n^{(j)} \in [0, K]$ of an unmasking trajectory targeting the paired clean sample $\vec x_0^{(j)}$. Unlike PUMA, which trains on the standard MDM loss, our objective is the \emph{correction loss}: Token-level cross-entropy of $f_\theta$ against the clean target evaluated at \emph{every} position, masked or already revealed,
\begin{align}
L_{\mathrm{correct}}(\param) \;=\; -\frac{1}{B} \sum_{j=1}^B \sum_{i=1}^L \log f_\param^{\,i}\!\bigl(x_0^{(j),i} \,\big|\, \tilde{\vec x}^{(j)}\bigr). \label{eq:correction-loss}
\end{align}
Including the revealed positions is what makes this a correction objective rather than the usual MDM denoising one: Tokens that the model previously filled in incorrectly contribute a non-trivial loss, training $f_\param$ to override its own past mistakes instead of merely copying its own outputs.

\section{Experiments}
\label{sec:experiments}

Our experiments are designed to answer three questions:
\begin{itemize}
  \item How do \rrd and TCT compare against iterative correction baselines? (\Cref{sec:logical,sec:humaneval})
  \item Does random remasking enable test-time scaling? (\Cref{sec:exp-scaling})
  \item How efficient is \rrd inference compared with other iterative correction approaches? (\Cref{sec:exp-compute})
\end{itemize}

\paragraph{Inference approaches.} We evaluate \rrd in both its standard and correction variants, referred to as ``\rrd (standard)'' and ``\rrd (correction)'', respectively, alongside two baselines: ReMDM~\citep{wangRemaskingDiscreteDiffusion2026} and ProSeCo~\citep{schiffLearnYourMistakes2026a}. Both baselines admit an arbitrarily large compute budget and have been reported to exhibit test-time scaling. ProSeCo inference requires its own training method, while ReMDM works with any existing model. In addition, we report the accuracy of other correction training (RemeDi \citep{huangDontSettleToo2025}, PRISM \citep{kimFineTuningMaskedDiffusion2025}, BackPlay \citep{liuBackPlayPluginLookBack2026}) and non-correction training (Standard, PUMA \citep{kimStopTrainingWorst2026}) variants.

\paragraph{Implementation variants of ReMDM.} When evaluating ReMDM with models trained with TCT, we use argmax sampling instead of the official stochastic sampling to keep the best performance. In analogy to \rrd, we use ``ReMDM (correction)'' to refer to the argmax sampling variant, and ``ReMDM (standard)'' for the original stochastic categorical sampling implementation that is used for standard-trained models. We include a detailed comparison between different ReMDM configurations in \Cref{sec:app_remdm}.

\subsection{Logical Tasks}
\label{sec:logical}
We first evaluate on two logical reasoning tasks. One is \textbf{Sudoku-Hard}, constructed from the training split of Sudoku-Extreme \citep{wang2025hierarchicalreasoningmodel}, retaining only puzzles with rating $> 10$, which yields 2.3M instances. The last 1{,}000 form the test set and the rest form the training set. Sudoku difficulty is measured by \emph{rating}, i.e., the number of backtracks needed by the tdoku solver \citep{dillion2025tdoku}. These puzzles are challenging even for experienced human solvers and dedicated solver programs, requiring substantial trial-and-error search.
The model in both tasks is a $6.8$M Qwen2-style decoder-only transformer with hidden size $256$ and $8$ layers and is trained from scratch.

\begin{table}[ht]
  \centering
  \caption{Accuracy (in $\%$) on Sudoku-Hard and 3-SAT, organized by training method and inference method. ReMDM, \rrd and ProSeCo use the same compute budget (256 steps per token for Sudoku-Hard, 32 steps per token for 3-SAT). \rrd is the primary driver for inference-time scaling, and TCT cooperates well with \rrd to further raise performance. }
  \label{tab:main-comparison}
  \begin{tabular}{llcc}
    \toprule
    Training method & Inference method & Sudoku-Hard & 3-SAT \\
    \midrule
    \multirow{4}{*}{Standard}
        & Standard                   & $48.7$ & $90.6$ \\
        & ReMDM (standard)           & $54.7$ & $93.5$ \\
        & \rrd (standard, $M=1$)     & $19.7$ & $86.0$ \\
        & \rrd (standard)   & $93.5$ & $98.5$ \\
    \midrule
    PUMA     & Standard & 50.8 & 79.6 \\
    RemeDi   & RemeDi   & 50.2 & 84.3 \\
    PRISM    & PRISM    & 52.2 & 86.0 \\
    BackPlay & BackPlay & 51.9 & 87.9 \\
    ProSeCo  & ProSeCo & 55.7 & 82.2 \\
    \midrule
    \multirow{4}{*}{TCT (ours)}
        & Standard                   & $51.2$          & $79.7$          \\
        & ReMDM (correction)         & $92.3$          & $96.5$          \\
        & \rrd (correction, $M=1$)   & $43.7$          & $90.0$          \\
        & \rrd (correction) & $\mathbf{97.5}$ & $\mathbf{99.3}$ \\
    \bottomrule
  \end{tabular}
\end{table}

\Cref{tab:main-comparison} compares the accuracy against prior work on Sudoku-Hard and 3-SAT. The per-level correction budget $M$ is set to 256 for Sudoku-Hard and 32 for 3-SAT. We use a smaller budget for 3-SAT because performance almost saturates at $M=32$. ReMDM and ProSeCo use the same compute budget for a fair comparison.
\rrd with standard training already surpasses other baselines on both tasks. TCT training further pushes the limit to 97.5\% on Sudoku-Hard and 99.3\% on 3-SAT. On the same TCT checkpoint and at the same budget, \rrd outperforms ReMDM (correction) by $5.2$ and $2.8$ points, so the gain comes from the remask--denoise kernel and not from the training alone. Furthermore, 3-SAT has higher peak performance than Sudoku-Hard, which aligns with our task structure analysis in \Cref{sec:task}.

\subsection{Coding Benchmark}
\label{sec:humaneval}
\begin{wrapfigure}{r}{0.5\linewidth}
  \centering
  \includegraphics[width=\linewidth]{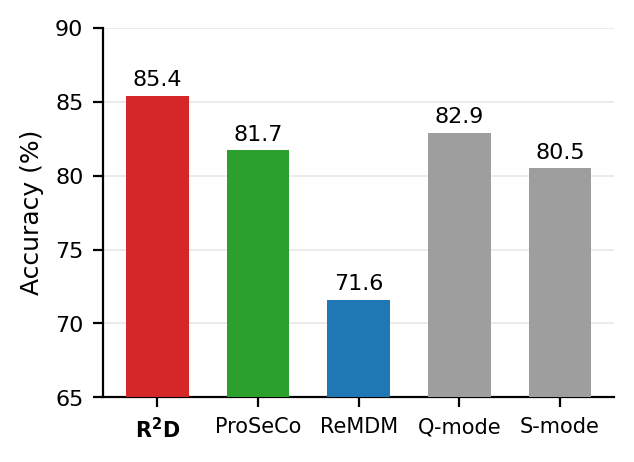}
  \caption{HumanEval+ accuracy with the off-the-shelf LLaDA2.1-mini (16B). The three
  colored bars are inference methods we run ourselves, and the two
  grey bars are the model's own published Q-mode and S-mode results. \rrd achieves the top accuracy, surpassing the official Q-mode performance.}
  \label{fig:humaneval-bars}
\end{wrapfigure}

To examine how \rrd behaves on practical language tasks with off-the-shelf models, we further evaluate on the coding benchmark HumanEval+ \citep{liuYourCodeGenerated2023} with LLaDA2.1-mini \citep{bieLLaDA21SpeedingText2026}, a 16B diffusion language model already trained with self-correction ability. \Cref{fig:humaneval-bars} reports the results. Besides the three iterative correction methods we run ourselves, we also quote the model's own published Q-mode and S-mode decoding results. \rrd reaches $85.4\%$, the top accuracy even beyond the model's published Q-mode accuracy ($82.9\%$), and it surpasses ProSeCo and ReMDM, two iterative correction baselines.

\subsection{Test-time Scaling}
\label{sec:exp-scaling}
The next question is how test-time performance scales with compute budget, so we compare the accuracy of \rrd, ReMDM and ProSeCo against the per-level correction budget $M$ on logical tasks. \rrd and ReMDM are evaluated with standard training checkpoints, while ProSeCo uses its own training method. As shown in \Cref{fig:main-scaling}, \rrd scales near-log-linearly: From $19.7\%$ at $M=1$ to $93.5\%$ at $M=256$ on Sudoku-Hard, and from $86.0\%$ to $98.5\%$ at $M=32$ on 3-SAT. ReMDM also improves with the budget but stays below \rrd at every budget. ProSeCo plateaus early at low performance: Its correction steps do not remask ($t=0$), where $\rho_0(p)=1$ and \Cref{prop:identity} guarantees no contraction.

\subsection{Compute Comparison}
\label{sec:exp-compute}

\begin{figure}[t]
  \centering
  \includegraphics[width=0.8\linewidth]{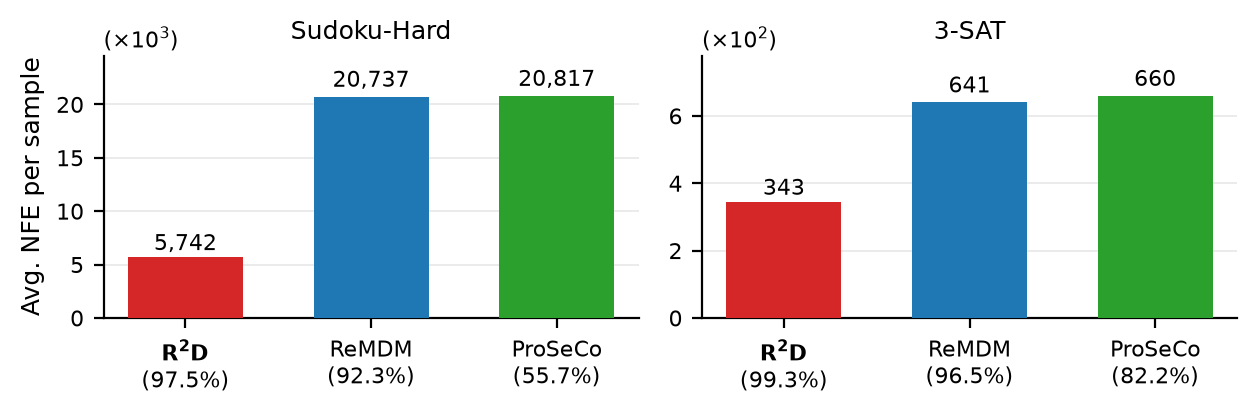}
  \caption{Averaged number of function evaluations (NFE) per sample at the budget of \Cref{tab:main-comparison} (lower is better), with accuracy in parentheses (TCT checkpoint for \rrd and ReMDM). \rrd exits early, whereas ReMDM and ProSeCo spend their full budget.}
  \label{fig:nfe-bars}
\end{figure}

Iterative correction methods buy accuracy with more model forward passes, so a fair comparison should account for compute as well as accuracy. \Cref{fig:nfe-bars} reports the averaged number of function evaluations (NFE) per sample that each method spends at the budget of \Cref{tab:main-comparison}. \rrd and ReMDM use the TCT-trained checkpoint. ReMDM and ProSeCo have no stopping rule and spend their entire budget. \rrd applies early exit (Line 15 of \Cref{alg:r2d-inference}) as soon as its prediction $\hat{\vec x}_0$ stabilizes, so it spends only about $1/4$ and $1/2$ of ReMDM's compute and reaches higher accuracy.

\subsection{Ablation Study}
\paragraph{\rrd design.} We ablate our inference approach on three critical designs: Uniform
remasking, early exit, and $\arg\max$ decoding. For uniform remasking, we replace it with low-confidence
remasking, a popular strategy used in standard inference. For early exit, we remove it
as ablation. For $\arg\max$ decoding (Line 6 of \Cref{alg:r2d-inference}), we replace it with stochastic sampling from the denoiser's raw distribution. \Cref{tab:remasking-ablation} (\Cref{sec:app_remasking}) reports the accuracy on Sudoku-Hard for each configuration across different compute budgets.
Low-confidence remasking almost removes the scaling effect: Accuracy moves only from $68.7\%$ to
$69.6\%$ across a $16\times$ increase in budget, where uniform remasking goes from $65.8\%$ to
$97.5\%$ over the same sweep. Removing early exit preserves the scaling trend but pays heavily for it: At $M=256$ the NFE is nearly 4 times that of the default configuration. Moreover, its accuracy is still lower than that of the default configuration, because random remasking can move the sequence away from the stationary point after convergence. Stochastic sampling
loses $10.4$ points on accuracy because
TCT trains the denoiser along an $\arg\max$ trajectory. It also spends more NFE, as a stochastic prediction makes it harder to fulfill the early-exit test.

\paragraph{TCT design.} We ablate TCT on its two designs in \Cref{alg:correct-training}: The top-$k$ unmasking position selection (Line 11), and the $\arg\max$ sampling (Line 9). For position selection, we replace top-$k$ with a random choice among the masked positions; For the reveal operator, we replace $\arg\max$ with temperature-$1$ sampling. We report the accuracy on Sudoku-Hard under \rrd inference, using compute budget $M=256$, as shown in \Cref{tab:tct-reveal-ablation} (\Cref{sec:app_reveal}).
Position selection is the load-bearing choice: Revealing random positions costs $12.0$ points, dropping accuracy from $97.5\%$ to $85.5\%$, because top-$k$ keeps the training trajectory close to a valid partial solution rather than letting it degrade into a corrupted one. The reveal operator is a minor lever by comparison, costing $0.7$ points, consistent with its role of matching the $\arg\max$ decoding that \rrd uses at inference.

\section{Conclusion}

We presented a unified Markov-chain view of masked diffusion inference in which standard decoding and self-correction arise from a single remask--denoise kernel acting on the clean sequence space. The contraction analysis motivates Remask-Denoise (\rrd) inference and Trajectory Correction Training (TCT), which together produce a nearly log-linear scaling effect on Sudoku-Hard and 3-SAT. Our inference approach also works with pretrained diffusion models on practical language tasks, surpassing published Q-mode accuracy on HumanEval+ with LLaDA2.1-mini. In addition, \rrd adopts an early-exit mechanism and is thus more compute-efficient. Applying \rrd and TCT to other domains, such as biological sequence design, is a promising future direction.

\subsubsection*{Acknowledgments}
We acknowledge support by National Science Foundation under Grant CCF-2106778. This work used computational resources provided through ACCESS allocation CIS260480, which is supported by National Science Foundation grants \#2138259, \#2138286, \#2138307, \#2137603, and \#2138296. Zhaoyi Zhou was additionally supported by the Wei Shen and Xuehong Zhang Presidential Fellowship and the Marija Ilić Endowed Fellowship at Carnegie Mellon University.

\bibliographystyle{iclr2027_conference}
\bibliography{ref,ref_more}

\appendix

\section{Proofs}
\label{sec:proofs}

\subsection{Proof for \Cref{prop:stationarity}}
\label{app:stationarity-proof}
\begin{proof}
By the definition of $K_t$,
\begin{align*}
(\pdata K_t)(\vec x_0')
&= \sum_{\vec x_0} \pdata(\vec x_0) \sum_{\vec x_t} q_{0\mid t}(\vec x_0' \mid \vec x_t)\, q_{t\mid 0}(\vec x_t \mid \vec x_0) \\
&= \sum_{\vec x_t} q_{0\mid t}(\vec x_0' \mid \vec x_t) \sum_{\vec x_0} q_{t\mid 0}(\vec x_t \mid \vec x_0)\, \pdata(\vec x_0) \\
&= \sum_{\vec x_t} q_{0\mid t}(\vec x_0' \mid \vec x_t)\, q_t(\vec x_t) \;=\; \pdata(\vec x_0'),
\end{align*}
where the last step applies Bayes' rule $q_{0\mid t}(\vec x_0' \mid \vec x_t)\, q_t(\vec x_t) = q_{t\mid 0}(\vec x_t \mid \vec x_0')\, \pdata(\vec x_0')$ and then sums $q_{t\mid 0}(\vec x_t \mid \vec x_0')$ over $\vec x_t$ to $1$.
\end{proof}

\subsection{Proof for \Cref{prop:identity}}
\label{app:identity-proof}
\begin{proof}
We first derive the stronger equality
\begin{equation}\label{eq:identity-equality}
\KL(\pdata \,\|\, p') \;=\; \rho_t(p)\,\KL(\pdata \,\|\, p) \;+\; \epsilon_t \;-\; R_t,
\end{equation}
where $R_t \ge 0$ is the remainder defined below; The bound in \Cref{prop:identity} then follows by dropping $R_t$. Define joint distributions over $(\vec x_t, \vec x_0')$
\begin{align*}
\mu(\vec x_t, \vec x_0') \;&=\; \tilde q_t(\vec x_t)\, f_\theta(\vec x_0' \mid \vec x_t), &
\nu(\vec x_t, \vec x_0') \;&=\; q_t(\vec x_t)\, q_{0\mid t}(\vec x_0' \mid \vec x_t).
\end{align*}
The $\vec x_0'$-marginal of $\mu$ is $p'$ (by definition of $p' = p \tilde K^\theta_t$), and the $\vec x_0'$-marginal of $\nu$ is $\pdata$ (by Bayes' rule, $q_t(\vec x_t)\, q_{0\mid t}(\vec x_0' \mid \vec x_t) = \pdata(\vec x_0')\, q_{t\mid 0}(\vec x_t \mid \vec x_0')$).

Expanding $\KL(\nu \,\|\, \mu)$ via the factorization $\mu = \tilde q_t \cdot f_\theta$ and $\nu = q_t \cdot q_{0\mid t}$ gives
\[
\KL(\nu \,\|\, \mu) \;=\; \KL(q_t \,\|\, \tilde q_t) \;+\; \epsilon_t \;=\; \rho_t(p)\,\KL(\pdata \,\|\, p) \;+\; \epsilon_t,
\]
using \eqref{eq:rho}. Expanding the same divergence via the alternative factorization $\nu(\vec x_t, \vec x_0') = \pdata(\vec x_0')\, q_{t\mid 0}(\vec x_t \mid \vec x_0')$ yields
\[
\KL(\nu \,\|\, \mu) \;=\; \KL(\pdata \,\|\, p') \;+\; R_t,
\]
where
\[
R_t \;=\; \E_{\vec x_0' \sim \pdata,\, \vec x_t \sim q_{t\mid 0}(\cdot \mid \vec x_0')}\!\left[\log \frac{q_{t\mid 0}(\vec x_t \mid \vec x_0')\, p'(\vec x_0')}{\tilde q_t(\vec x_t)\, f_\theta(\vec x_0' \mid \vec x_t)}\right] \;\ge\; 0,
\]
the non-negativity following because $R_t$ is itself a KL divergence. Equating the two expressions for $\KL(\nu \,\|\, \mu)$ gives \eqref{eq:identity-equality}; Dropping $R_t \ge 0$ yields the bound stated in \Cref{prop:identity}.
\end{proof}

\subsection{Proof for \Cref{prop:training-bound}}
\label{app:training-bound-proof}
\begin{proof}
Starting from the marginal of $p' = p \tilde K^\theta_t$ on $\vec x_0$,
\begin{align*}
p'(\vec x_0) \;=\; \sum_{\vec x_t} \tilde{q}_t(\vec x_t)\, f_\theta(\vec x_0 \mid \vec x_t) \;=\; \E_{\vec x_t \sim \tilde{q}_t}\!\bigl[f_\theta(\vec x_0 \mid \vec x_t)\bigr],
\end{align*}
apply Jensen's inequality directly:
\begin{align*}
-\log p'(\vec x_0) \;=\; -\log \E_{\vec x_t \sim \tilde{q}_t}\!\bigl[f_\theta(\vec x_0 \mid \vec x_t)\bigr] \;\le\; \E_{\vec x_t \sim \tilde{q}_t}\!\bigl[-\log f_\theta(\vec x_0 \mid \vec x_t)\bigr].
\end{align*}
Taking expectation over $\vec x_0 \sim \pdata$ and subtracting $H(\pdata)$ yields the claim.
\end{proof}

\section{Trajectory Correction Training Algorithm}
\label{app:tct-algorithm}
\Cref{alg:correct-training} gives the pseudo-code of a single TCT update step described in \Cref{sec:method}. The $B$ chains in a batch are advanced in a streaming fashion: After each gradient step, a chain that has reached the stage budget $K$ is refilled with a fresh sample and a fully masked sequence, while every other chain reveals the top-$k$ most confident masked positions with the model's $\arg\max$ predictions.

\begin{algorithm}[ht]
\caption{Trajectory Correction Training (Single Update Step)}
\label{alg:correct-training}
\begin{algorithmic}[1]
\Require MDM $f_\param$, max stages $K$, batch of partially masked sequences $\{\tilde{\vec x}^{(j)}\}_{j=1}^B$, paired clean samples $\{\vec x_0^{(j)}\}_{j=1}^B$, per-example stage counters $\{n^{(j)}\}_{j=1}^B \in [0, K]$
\State Compute logits $\{f_\param(\cdot \mid \tilde{\vec x}^{(j)})\}_{j=1}^B$
\State Compute correction loss $L_{\mathrm{correct}}(\param)$ (\Cref{eq:correction-loss}) from logits and $\{\vec x_0^{(j)}\}$; Update $\param$
\For{$j = 1, \cdots, B$ \textbf{in parallel}} \Comment{Streaming update}
    \If{$n^{(j)} = K$}
        \State Sample $\vec x_0^{(j)} \sim \pdata$
        \State $n^{(j)} \gets 0$
        \State Initialize $\tilde{\vec x}^{(j)} = \mask^L$. \Comment{refill: Restart chain on a new sequence}
    \Else
        \State $\hat x^{(j)}_i \gets \arg\max_{v \in V} f_\param^{\,i}(v \mid \tilde{\vec x}^{(j)})$,
        \State $s_i \gets \max_{v \in V} f_\param^{\,i}(v \mid \tilde{\vec x}^{(j)})$ for masked positions $i$ \Comment{unmasking policy from the model}
        \State Select reveal set $S^{(j)}$ from $\{s_i\}$ \Comment{top-$k$ with thresholding}
        \State Unmask $\tilde{\vec x}^{(j)}_i \gets \hat x^{(j)}_i$ for all $i \in S^{(j)}$;\quad $n^{(j)} \gets n^{(j)} + 1$ \Comment{fill with model predictions}
    \EndIf
\EndFor
\end{algorithmic}
\end{algorithm}

\section{Additional Experiment Results}
\label{sec:app_exp_result}

\subsection{Error bars of \rrd and ReMDM inference.}
\label{sec:app_scaling}
\Cref{fig:tct-scaling-overview} shows the error bars of \rrd and ReMDM, under both TCT and standard training. We use 3 random seeds for training, and 3 random seeds for inference on each training checkpoint. The full figure below places those panels beside their TCT-training counterparts and draws the spread across training seeds.

\begin{figure}[ht]
  \centering
  \begin{subfigure}[t]{0.49\linewidth}
    \centering
    \includegraphics[width=\linewidth]{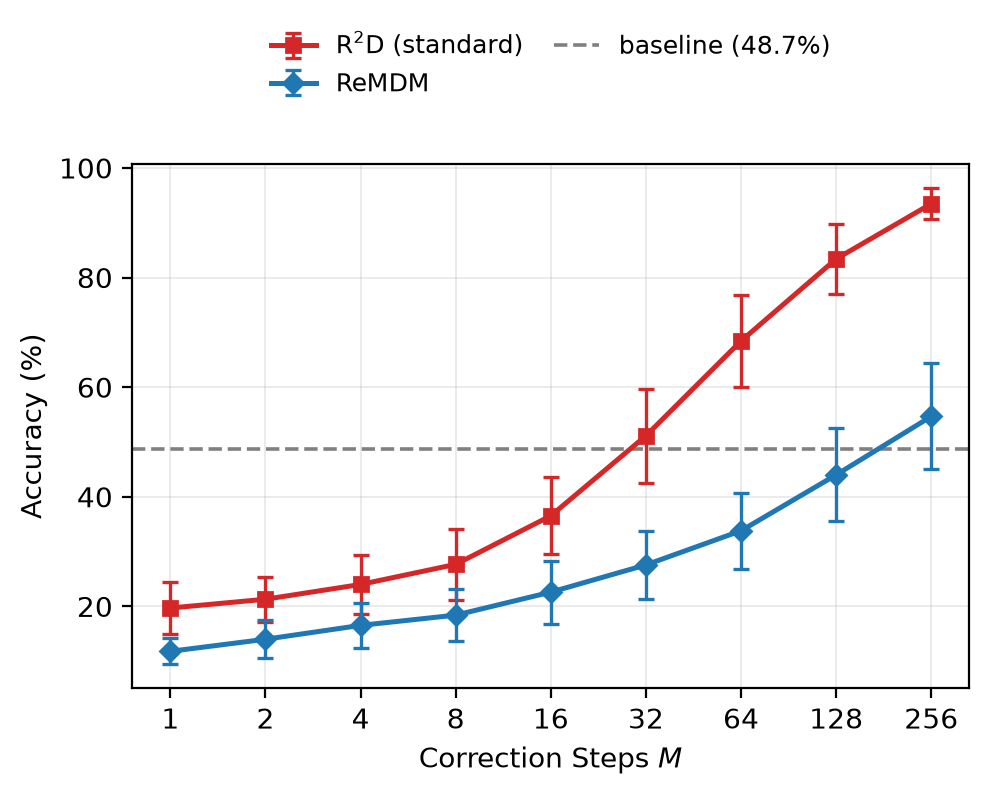}
    \caption{Sudoku-Hard, standard training}
  \end{subfigure}\hfill
  \begin{subfigure}[t]{0.49\linewidth}
    \centering
    \includegraphics[width=\linewidth]{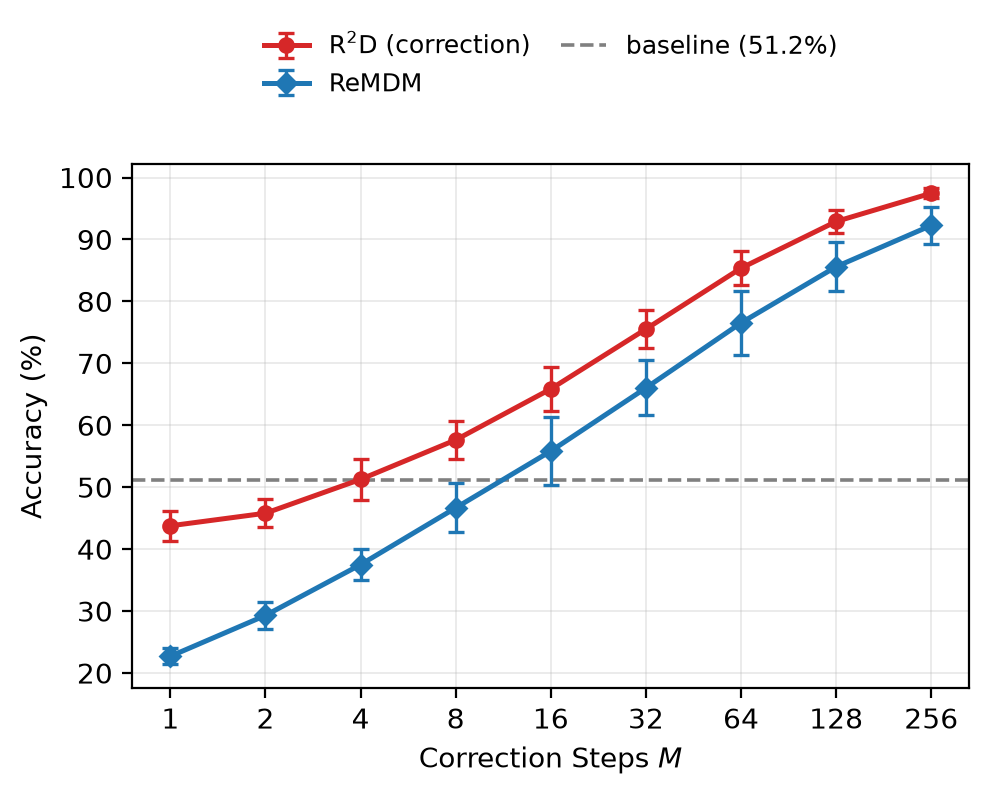}
    \caption{Sudoku-Hard, TCT training}
  \end{subfigure}

  \vspace{0.6em}
  \begin{subfigure}[t]{0.49\linewidth}
    \centering
    \includegraphics[width=\linewidth]{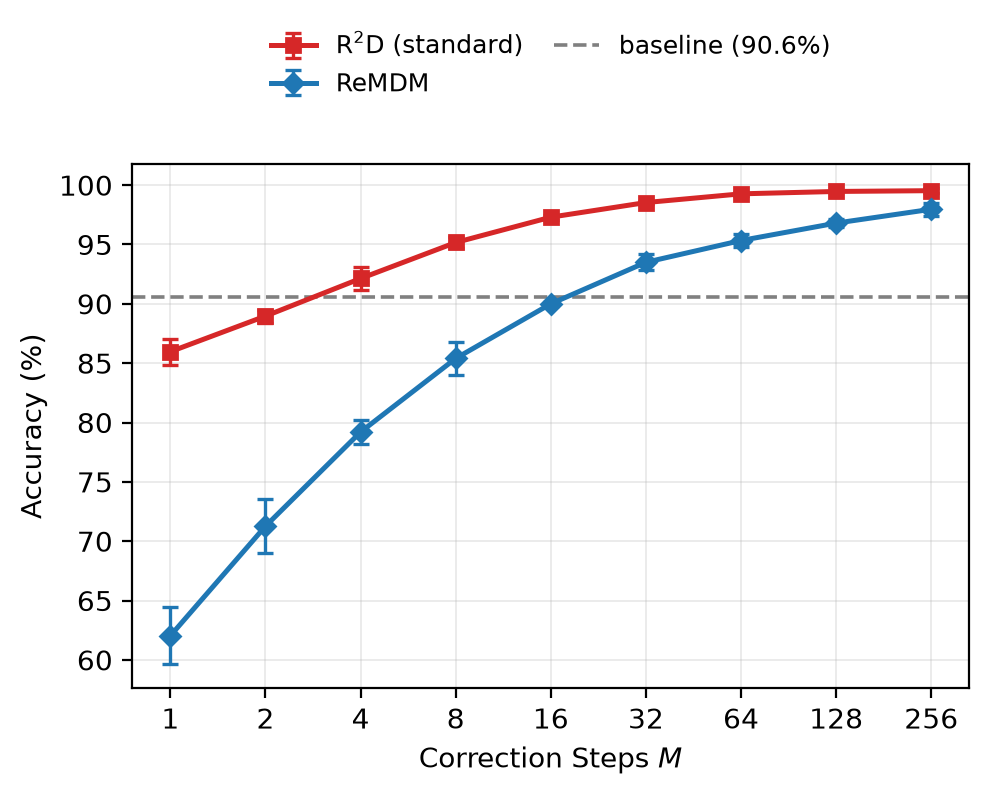}
    \caption{3-SAT, standard training}
  \end{subfigure}\hfill
  \begin{subfigure}[t]{0.49\linewidth}
    \centering
    \includegraphics[width=\linewidth]{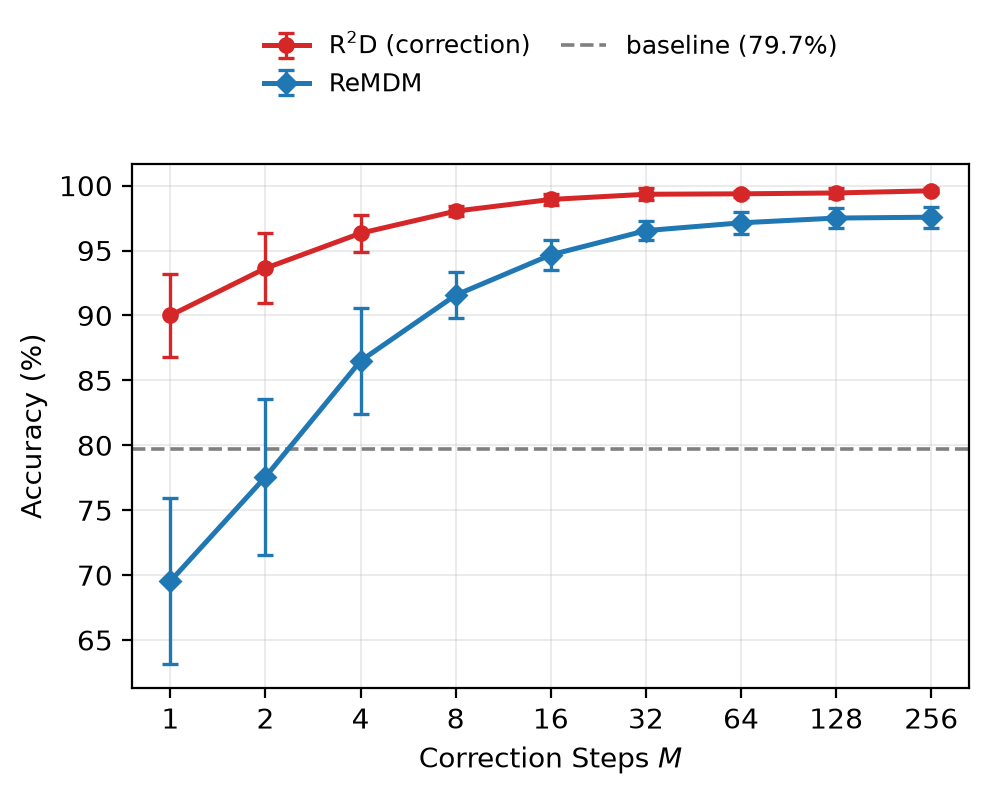}
    \caption{3-SAT, TCT training}
  \end{subfigure}
  \caption{Inference scaling on Sudoku-Hard (top) and 3-SAT (bottom), under standard training (left) and TCT training (right). The y-axis reports task accuracy and the x-axis the per-token correction budget $M$ on a logarithmic scale; error bars are one standard deviation across three training seeds. The horizontal dashed line marks each panel's own no-correction baseline. \rrd scales near-log-linearly in all four panels and stays above ReMDM at every budget. Protocol details are in \Cref{sec:app_exp_detail}.}
  \label{fig:tct-scaling-overview}
\end{figure}

\subsection{Compute and Latency}
\label{sec:app_compute}
We report the averaged NFE and averaged wall time of ReMDM, \rrd and ProSeCo.
\Cref{tab:compute-normalized} is the compute-normalized view of \Cref{tab:main-comparison},
discussed in \Cref{sec:exp-compute}. We also report standard deviation of the accuracy for ReMDM and \rrd, under 3 random seeds. Wall time is measured at batch size $1$ on a single NVIDIA GH200.
The cost of correction relative to standard decoding is nonetheless real. At the largest budget, \rrd's per-sample latency is about $77\times$ that of standard decoding on the same checkpoint for Sudoku-Hard ($49.4$ s against $0.64$ s) and about $26\times$ on 3-SAT ($4.76$ s against $0.18$ s). What makes this practical is that the budget $M$ is a free knob rather than a fixed cost: At $M = 1$ the same sampler costs $1.31$ s and $0.36$ s respectively, within roughly twice the standard baseline, and every point in between is available along the curves of \Cref{fig:main-scaling}.

\begin{table}[ht]
  \centering
  \caption{Compute-normalized comparison on Sudoku-Hard and 3-SAT. Each inference method is reported with its accuracy (in $\%$), averaged NFE, and per-sample wall time at batch size $1$ on a single NVIDIA GH200 GPU. \rrd is the only method that exits early: on Sudoku-Hard it reaches $\mathbf{97.5}$ in $49.4$ s/sample where ReMDM reaches $92.3$ in $187.9$ s/sample --- $5.2$ points better at $3.8\times$ less wall time. Measurement protocol in \Cref{sec:app_exp_detail}.}
  \label{tab:compute-normalized}
  \small
  \setlength{\tabcolsep}{4pt}
  \begin{tabular}{ll ccc ccc}
    \toprule
    & & \multicolumn{3}{c}{Sudoku-Hard} & \multicolumn{3}{c}{3-SAT} \\
    \cmidrule(lr){3-5}\cmidrule(lr){6-8}
    Training method & Inference method & Acc. & NFE & s/sample & Acc. & NFE & s/sample \\
    \midrule
    \multirow{4}{*}{Standard}
        & Standard                       & $48.7\std{1.0}$ & 81    & 0.74  & $90.6\std{2.3}$ & 20  & 0.16 \\
        & ReMDM (standard)               & $54.7\std{9.7}$ & 20737 & 177.2 & $93.5\std{0.7}$ & 641 & 5.30 \\
        & \rrd (standard, $M=1$)         & $19.7\std{4.7}$ & 162   & 1.41  & $86.0\std{1.1}$ & 40  & 0.42 \\
        & \rrd (standard, $M=M^\dagger$) & $93.5\std{2.8}$ & 4607  & 36.9  & $98.5\std{0.2}$ & 336 & 2.70 \\
    \midrule
    ProSeCo & ProSeCo ($M=M^\dagger$) & 55.7 & 20817 & 168.5 & 82.2 & 660 & 5.02 \\
    \midrule
    \multirow{4}{*}{TCT (ours)}
        & Standard                         & $51.2\std{1.5}$          & 81    & 0.64  & $79.7\std{2.1}$          & 20  & 0.18 \\
        & ReMDM (correction)               & $92.3\std{2.9}$          & 20737 & 187.9 & $96.5\std{0.8}$          & 641 & 4.95 \\
        & \rrd (correction, $M=1$)         & $43.7\std{2.4}$          & 162   & 1.31  & $90.0\std{3.2}$          & 40  & 0.36 \\
        & \rrd (correction, $M=M^\dagger$) & $\mathbf{97.5}\std{0.8}$ & 5742  & 49.4  & $\mathbf{99.3}\std{0.5}$ & 343 & 4.76 \\
    \bottomrule
  \end{tabular}
\end{table}

\subsection{Compute and Latency on HumanEval+}
\label{sec:app_humaneval_compute}
\Cref{tab:humaneval-cost} pairs the accuracy of \Cref{fig:humaneval-bars} with the cost of the
three correction methods. \rrd reaches top accuracy with roughly half of ReMDM's compute, resulting from the early-exit mechanism.

\begin{table}[ht]
  \centering
  \caption{HumanEval+ accuracy, averaged NFE and averaged wall time (in \emph{minute} per sample)
  for the three iterative correction methods of \Cref{fig:humaneval-bars}, all evaluated with the
  off-the-shelf LLaDA2.1-mini. \rrd reaches the top accuracy while spending roughly half
  of ReMDM's compute.}
  \label{tab:humaneval-cost}
  \begin{tabular}{lccc}
    \toprule
    Method & Accuracy & Avg NFE & Avg wall time \\
    \midrule
    \rrd (correction) & $\mathbf{85.4}$ & 757  & 6.6  \\
    ProSeCo   & 81.7                     & 692  & 6.0  \\
    ReMDM     & 71.6                     & 1632 & 14.3  \\
    \bottomrule
  \end{tabular}
\end{table}

\subsection{ReMDM Implementation Choice under TCT Training}
\label{sec:app_remdm}
We stick to the official implementation from \url{https://github.com/kuleshov-group/remdm} when running ReMDM with standard training. When the model is trained with TCT, we make one adaptation to ReMDM inference: We use argmax sampling instead of the categorical stochastic sampling in the official implementation.

We actually consider two critical implementation choices for ReMDM: Carry-over unmasking vs. no carry-over, and argmax sampling against stochastic sampling. \Cref{tab:remdm-variants} compares the 4 configurations on both Sudoku-Hard and 3-SAT. Each entry is evaluated on 9 repeated runs: three TCT training checkpoints with different seeds, and three inference seeds for each checkpoint. Stochastic sampling is significantly weaker on both tasks. This is because TCT models are trained on argmax sampling trajectories, and thus stochastically revealed tokens are out-of-distribution for these models. Under argmax sampling, Sudoku-Hard is indifferent to the unmasking choice, but 3-SAT favors carry-over unmasking.
Therefore, we report the accuracy of ReMDM under carry-over unmasking + argmax sampling in \Cref{tab:main-comparison} to ensure a fair comparison with \rrd, and refer to this configuration as \textbf{ReMDM (correction)}. The official configuration, i.e., carry-over unmasking + stochastic sampling, is referred to as ReMDM (standard) and is still used for standard-trained models.

\begin{table}[ht]
  \centering
  \caption{ReMDM implementation variants on the TCT checkpoint, comparing carry-over denoising against the sampling operator, at $M = 256$ on Sudoku-Hard and $M = 32$ on 3-SAT. Accuracy is in $\%$, mean $\pm$ standard deviation over three training seeds $\times$ three inference seeds. The sampling operator dominates: $\arg\max$ is the operator TCT trains under. Carry-over is close to free on Sudoku-Hard but decisive on 3-SAT. The standard checkpoint is not ablated; there we run the official carry-over $+$ stochastic configuration.}
  \label{tab:remdm-variants}
  \begin{tabular}{llcc}
    \toprule
    Carry-over & Sampling & Sudoku-Hard & 3-SAT \\
    \midrule
    Yes & Stochastic & $40.0\std{6.7}$          & $9.9\std{3.0}$           \\
    No  & Stochastic & $46.9\std{12.3}$         & $1.4\std{1.8}$           \\
    Yes & Argmax     & $92.3\std{2.9}$          & $\mathbf{96.5}\std{0.8}$ \\
    No  & Argmax     & $\mathbf{92.4}\std{2.6}$ & $67.7\std{30.9}$         \\
    \bottomrule
  \end{tabular}
\end{table}

We further investigate why the two unmasking choices (carry-over vs. no carry-over) have different
impacts on the two tasks. We find that the TCT-trained model on Sudoku holds a correct grid
exactly in place, whereas on 3-SAT it keeps rewriting a small fraction of an already-correct
assignment. To test this, we feed the TCT-trained model the ground-truth answer, randomly mask a
fraction of the answer tokens, run a single forward pass \emph{without} carry-over, and measure the
fraction of revealed tokens the model reproduces unchanged. As shown in \Cref{tab:token-preservation},
on Sudoku-Hard the
model preserves every correct revealed cell at every masking ratio. On 3-SAT, however, preservation is exact at high mask ratios but decreases at smaller ones. In other words, the model tends to modify the correct tokens at a small probability when most of the tokens are revealed in the sequence. Therefore, a correct solution can be modified by mistake near the end of generation, and this explains the high variance on 3-SAT without carry-over.

\begin{table}[ht]
  \centering
  \caption{Token preservation of the TCT-trained denoiser: The fraction of correct revealed tokens that survive a single forward pass without carry-over, averaged over three training seeds. On Sudoku-Hard the correct grid is an exact fixed point of the denoiser at every masking ratio; On 3-SAT it is not, and preservation is weakest precisely when the sequence is nearly complete, which is why carry-over is essential there and not on Sudoku-Hard.}
  \label{tab:token-preservation}
  \begin{tabular}{ccc}
    \toprule
    Mask ratio & Sudoku-Hard & 3-SAT \\
    \midrule
    0.00 & 1.000 & 0.951 \\
    0.10 & 1.000 & 0.988 \\
    0.30 & 1.000 & 0.997 \\
    0.50 & 1.000 & 0.998 \\
    0.70 & 1.000 & 1.000 \\
    0.90 & 1.000 & 1.000 \\
    0.95 & 1.000 & 1.000 \\
    \bottomrule
  \end{tabular}
\end{table}

\section{Ablation Study}

\subsection{\texorpdfstring{\rrd}{R2D} Inference Design}
\label{sec:app_remasking}
This section ablates the three elements in \rrd design. Starting from the default configuration, each row changes one design element of
\Cref{alg:r2d-inference} in isolation: Uniform remasking is replaced by low-confidence remasking,
early exit is disabled, and the $\arg\max$ decoding of Line 6 is replaced by sampling from the
denoiser. \Cref{tab:remasking-ablation} gives the accuracy result along with averaged NFE on Sudoku-Hard with TCT training, under different compute budgets. We use three model checkpoints trained with three different random seeds. Each table entry is measured on 9 repeated evaluations, three runs for each checkpoint.

Low-confidence remasking almost removes the scaling effect: Accuracy moves only from $68.7\%$ to
$69.6\%$ across a $16\times$ increase in budget, where uniform remasking goes from $65.8\%$ to
$97.5\%$ over the same sweep. Removing early exit preserves the scaling trend but pays heavily for it: At $M=256$ the NFE is nearly 4 times that of the default configuration. Moreover, its accuracy is still lower than that of the default configuration, because random remasking can move the sequence away from the stationary point after convergence. Stochastic sampling
loses $10.4$ points on accuracy because
TCT trains the denoiser along an $\arg\max$ trajectory. It also spends more NFE, as a stochastic prediction makes it harder to fulfill the early-exit test.

\begin{table}[ht]
  \centering
  \caption{\rrd (correction) inference design on Sudoku-Hard under TCT training. Each row changes one design element of \Cref{alg:r2d-inference} relative to \texttt{Default}. Each cell reports accuracy (in $\%$, mean $\pm$ standard deviation over three training seeds) with averaged NFE in parentheses. Low-confidence remasking almost removes the scaling effect; Removing early exit preserves scaling but at markedly higher NFE; Stochastic sampling in place of $\arg\max$ costs accuracy at every budget while spending more NFE.}
  \label{tab:remasking-ablation}
  \footnotesize
  \setlength{\tabcolsep}{1.5pt}
  \begin{tabular}{lccccc}
    \toprule
    Run & $M=16$ & $M=32$ & $M=64$ & $M=128$ & $M=256$ \\
    \midrule
    \texttt{Default}        & $65.8\std{3.6}$ (1050)          & $\mathbf{75.6}\std{3.1}$ (1477) & $\mathbf{85.4}\std{2.8}$ (2412) & $\mathbf{92.9}\std{1.9}$ (3552)  & $\mathbf{97.5}\std{0.8}$ (5742)  \\
    \texttt{Low confidence} & $\mathbf{68.7}\std{1.8}$ (751)  & $69.0\std{2.4}$ (990)           & $69.5\std{2.4}$ (1356)          & $69.3\std{2.4}$ (2212)           & $69.6\std{2.3}$ (3271)           \\
    \texttt{No early exit}  & $63.9\std{3.1}$ (1377)          & $71.7\std{3.3}$ (2673)          & $79.2\std{4.1}$ (5265)          & $86.7\std{3.9}$ (10449)          & $93.3\std{2.4}$ (20817)          \\
    \texttt{Stochastic}     & $37.5\std{4.3}$ (1181)          & $50.1\std{6.3}$ (2034)          & $63.6\std{6.0}$ (3005)          & $76.1\std{6.3}$ (4146)           & $87.1\std{4.7}$ (6156)           \\
    \bottomrule
  \end{tabular}
\end{table}

\subsection{TCT Training Design}
\label{sec:app_reveal}
The $\arg\max$ reveal of Line 9 and the top-$k$ confidence reveal of Line 11 in
\Cref{alg:correct-training} are deliberate design choices rather than arbitrary deviations from the
theory. Following PUMA \citep{kimStopTrainingWorst2026}, TCT trains the denoiser on its own
inference-time trajectory; since \rrd decodes with $\arg\max$, revealing with $\arg\max$ during
training keeps the training and inference operators consistent, and revealing the highest-confidence
positions keeps the training trajectory close to correct rather than letting it degrade into noise.

\Cref{tab:tct-reveal-ablation} ablates each choice independently on Sudoku-Hard, replacing the
$\arg\max$ reveal with temperature-$1$ sampling and, separately, the top-$k$ reveal with a random
choice among the masked positions. Position selection is the load-bearing choice: Random reveal costs
$12.0$ points. The reason is that top-$k$ keeps the revealed prefix mostly correct, so the model
spends its capacity learning to predict the remaining masks from a valid partial solution --- that is,
learning the task structure --- rather than learning to complete corrupted grids. The reveal operator
is a minor lever by comparison: Stochastic reveal costs only $0.7$ points, consistent with its role as
the operator that \rrd inference matches. Both approximations therefore help rather than hurt.

\begin{table}[ht]
  \centering
  \caption{Reveal rule in TCT training (\Cref{alg:correct-training}) on Sudoku-Hard, replacing the argmax reveal of Line 9 and the top-$k$ confidence reveal of Line 11. Position selection is the load-bearing choice, while argmax reveal is a minor lever; Both approximations help rather than hurt.}
  \label{tab:tct-reveal-ablation}
  \begin{tabular}{lc}
    \toprule
    Reveal rule & Sudoku-Hard acc. ($\%$) \\
    \midrule
    Top-$k$ confidence $+$ argmax (published) & \textbf{97.5} \\
    Random position $+$ argmax (Line 11 ablation) & 85.5 \\
    Top-$k$ confidence $+$ stochastic (Line 9 ablation) & 96.8 \\
    \bottomrule
  \end{tabular}
\end{table}

\section{Additional Experiment Details}
\label{sec:app_exp_detail}
\subsection{Sudoku-Hard}
\paragraph{Data link.} \url{https://huggingface.co/datasets/sapientinc/sudoku-extreme}.

\paragraph{Model architecture.} The denoiser is a small Qwen2-style decoder-only transformer trained from scratch. The full architecture is summarized in \Cref{tab:denoiser-arch} (column ``Sudoku-Hard''); The resulting parameter count is approximately $6.8$M, of which the embedding and language-model head contribute only $\sim 5.6$K parameters due to the tiny vocabulary (digits $1$--$9$, the mask token, and a padding token).

\begin{table}[ht]
  \centering
  \caption{Denoiser architecture for Sudoku-Hard and 3-SAT. The two share a Qwen2-style backbone and differ only in vocabulary size and maximum position, which are dictated by the input encoding.}
  \label{tab:denoiser-arch}
  \begin{tabular}{lcc}
    \toprule
    Hyperparameter & Sudoku-Hard & 3-SAT \\
    \midrule
    Vocabulary size & 11 & 42 \\
    Hidden size & 256 & 256 \\
    FFN intermediate size (SwiGLU) & 768 & 768 \\
    Number of layers & 8 & 8 \\
    Number of attention heads & 8 & 8 \\
    Number of key/value heads & 8 & 8 \\
    Maximum position & 162 & 657 \\
    Dropout & 0.0 & 0.0 \\
    \midrule
    Total parameters & $\approx 6.8$M & $\approx 6.8$M \\
    \bottomrule
  \end{tabular}
\end{table}

\paragraph{Training.} The denoiser is trained with TCT (\Cref{alg:correct-training}) on the Sudoku-Hard training split. We use a per-GPU batch size of $256$. The checkpoint reported in the main paper is taken at step $460{,}000$. A single training run takes around 3 hours on a single Nvidia L40 GPU.

\paragraph{Inference.} We set correction frequency to 1 and early-exit patience to $s = 5$ consecutive unchanged $\hat{\vec x}_0$, run the inner loop at every grid point (no skipped large-$t$ steps), and disable early commitment of revealed tokens so that the kernel may freely revise any position throughout the trajectory.

\subsection{3-SAT}
\paragraph{Data link.} \url{https://www.cs.ubc.ca/~hoos/SATLIB/benchm.html}

\paragraph{Model architecture.} The 3-SAT denoiser shares the same Qwen2-style backbone as the Sudoku-Hard model and differs only in vocabulary size and maximum context length, both of which are dictated by the SATLIB encoding (20 variables, 91 clauses). See \Cref{tab:denoiser-arch} (column ``3-SAT''); The parameter count is again approximately $6.8$M.

\paragraph{Training.} The denoiser is trained from scratch with TCT (\Cref{alg:correct-training}) on a 1M-instance synthetic 3-SAT training set, using $K = 5$ correction phases per outer trajectory. The checkpoint reported in the main paper is taken at step $100{,}000$. A single training run takes around 4 hours on a single Nvidia GH200 GPU.

\paragraph{Inference.} As on Sudoku-Hard, we set correction frequency to 1 and early-exit patience to $s = 5$ consecutive unchanged $\hat{\vec x}_0$, run the inner loop at every grid point (no skipped large-$t$ steps), and disable early commitment of revealed tokens so that the kernel may freely revise any position throughout the trajectory.

\subsection{HumanEval+}
\label{sec:app_humaneval_detail}
Benchmark link can be found at \url{https://github.com/evalplus/evalplus}.

Unlike Sudoku-Hard and 3-SAT, this experiment is inference-only: We evaluate the off-the-shelf
LLaDA2.1-mini (16B MoE) \citep{bieLLaDA21SpeedingText2026} as released, with no TCT and no
finetuning of any kind, so what varies across methods is the inference algorithm alone. All $164$
HumanEval+ problems \citep{liuYourCodeGenerated2023} are scored with EvalPlus; generations are
produced on the base HumanEval prompts and re-scored against the plus test suite.

LLaDA2.1-mini decodes a response in blocks rather than over
the whole sequence at once: The response is split into consecutive blocks of $32$ tokens, up to
$512$ generated tokens ($16$ blocks), and the model attends to the prompt and all previously
decoded blocks while denoising only the current one. Within a block we decode greedily
(temperature $0$) with confidence-thresholded parallel unmasking: Every masked position whose
top-token probability exceeds $0.9$ is revealed at that step, with at least one position revealed
per step so that decoding always advances. Given the block-diffusion nature of the model, \rrd, ProSeCo and ReMDM all adopt a block-wise correction: Correction only happens on the current block, and prior blocks and future masks are left untouched. \rrd uses correction frequency $1$
(the kernel is applied at every block) and the early-exit patience is $\tau=2$. Remasking is
uniform at random, as in \Cref{alg:r2d-inference}. \rrd is reported as the mean $\pm$ standard
deviation over three inference seeds. \rrd and ReMDM both use 32 correction steps per token. ProSeCo uses 32 correction steps per block.

\end{document}